\newtheorem{defin}{Definition}
\newtheorem{thm}{Theorem}
\newtheorem{lem}{Lemma}
\newtheorem{cor}{Corollary}
\begin{document}

\title{Fixed-point and coordinate descent algorithms for regularized kernel methods}

\author{Francesco Dinuzzo}

\maketitle

\begin{abstract}
In this paper, we study two general classes of optimization algorithms for kernel methods with convex loss function and quadratic norm regularization, and analyze their convergence. The first approach, based on fixed-point iterations, is simple to implement and analyze, and can be easily parallelized. The second, based on coordinate descent, exploits the structure of additively separable loss functions to compute solutions of line searches in closed form. Instances of these general classes of algorithms are already incorporated into state of the art machine learning software for large scale problems. We start from a solution characterization of the regularized problem, obtained using sub-differential calculus and resolvents of monotone operators, that holds for general convex loss functions regardless of differentiability. The two methodologies described in the paper can be regarded as instances of non-linear Jacobi and Gauss-Seidel algorithms, and are both well-suited to solve large scale problems.
\end{abstract}

\section{Introduction}

The development of optimization software for learning from large datasets is heavily influenced by memory hierarchies of computer storage. In presence of memory constraints, most of the high order optimization methods become unfeasible, whereas techniques such as coordinate descent or stochastic gradient descent may exploit the specific structure of learning functionals to scale well with the dataset size. Considerable effort has been devoted to make kernel methods feasible on large scale problems \citep{Bottou07}. One of the most important features of modern machine learning methodologies is the ability to leverage on sparsity in order to obtain scalability. Typically, learning methods that impose sparsity are based on the minimization of non-differentiable objective functionals. Is this the case of support vector machines or methods based on $\ell_1$ regularization.

In this chapter, we analyze optimization algorithms for a general class of regularization functionals, using sub-differential calculus and resolvents of monotone operators \citep{Rockafellar70, Hiriart-Urruty04} to manage non-differentiability. In particular, we study learning methods that can be interpreted as the minimization of a convex empirical risk term plus a squared norm regularization into a reproducing kernel Hilbert space \citep{Aronszajn50} $\mathcal{H}_K$ with non-null reproducing kernel $K$, namely
\begin{equation}\label{ch02-EQ01}
\min_{g \in \mathcal{H}_K} \left(f\left(g(x_1),\ldots,g(x_{\ell})\right) + \frac{\|g\|^2_{\mathcal{H}_K}}{2}\right),
\end{equation}
\noindent where $f: \mathbb{R}^{\ell}  \rightarrow \mathbb{R}_+$ is a finite-valued bounded below convex function. Regularization problems of the form (\ref{ch02-EQ01}) admit a unique optimal solution which, in view of the representer theorem \citep{Scholkopf01}, can be represented as a finite linear combination of kernel sections:
\[
g(x) =\sum_{i=1}^{\ell}c_iK_{x_i}(x).
\]

We characterize optimal coefficients $c_i$ of the linear combination via a family of non-linear equations. Then, we introduce two general classes of optimization algorithms for large scale regularization methods that can be regarded as instances of non-linear Jacobi and Gauss-Seidel algorithms, and analyze their convergence properties. Finally, we state a theorem that shows how to reformulate convex regularization problems, so as to trade off positive semidefiniteness of the kernel matrix for differentiability of the empirical risk.

\section{Solution characterization}

As a consequence of the representer theorem, an optimal solution of problem (\ref{ch02-EQ01}) can be obtained by solving finite-dimensional optimization problems of the form
\begin{equation}\label{ch02-EQ02}
\min_{c \in \mathbb{R}^{\ell}} F(c), \qquad F(c) = f(\mathbf{K}c)+ \frac{c^T\mathbf{K}c}{2},
\end{equation}
\noindent where $\mathbf{K} \in \mathbb{R}^{\ell \times \ell}$ is a non-null symmetric positive semi-definite matrix called \emph{kernel matrix}. The entries $k_{ij}$ of the kernel matrix are given by
\[
k_{ij} = K(x_i,x_j),
\]
\noindent where $K:\mathcal{X} \times \mathcal{X} \rightarrow \mathbb{R}$ is a positive semidefinite kernel function. It is easy to verify that the resulting kernel matrix is symmetric and positive semi-definite. Let $k_i$ ($i = 1, \ldots, \ell$) denote the columns of the kernel matrix.  Particularly interesting is the case in which function $f$ is additively separable.

\begin{defin}[Additively separable functional]
A functional $f:\mathbb{R}^{\ell} \rightarrow \mathbb{R}$ is called \emph{additively separable} if
\begin{equation}\label{ch02-EQ03}
f(z) = \sum_{i=1}^{\ell}f_i(z_i).
\end{equation}
\end{defin}

Parametric models with $\ell_2$ (ridge) regularization corresponds to the case in which inputs are $n$-dimensional numeric vectors ($\mathcal{X} = \mathbb{R}^{n}$) and the kernel matrix is chosen as $\mathbf{K} =\mathbf{X} \mathbf{X}^T$, where $\mathbf{X} \in \mathbb{R}^{\ell \times n}$ is a matrix whose rows are the input data $x_i$. Letting
\begin{equation}\label{ch02-EQ04}
w := \mathbf{X}^Tc,
\end{equation}
\noindent the following class of  problems is obtained:
\begin{equation}\label{ch02-EQ05}
\min_{w \in \mathbb{R}^n} \left(f(\mathbf{X}w)+ \frac{\|w\|_2^2}{2} \right).
\end{equation}
\noindent Observe that one can optimize over the whole space $\mathbb{R}^n$, since the optimal weight vector will automatically be in the form (\ref{ch02-EQ04}). Parametric models with $\ell_2$ regularization can be seen as specific instances of kernel methods in which $K$ is the linear kernel:
\[
K(x_1,x_2) = \langle x_1,  x_2 \rangle_2.
\]

In the following, two key mathematical objects will be used to characterize optimal solutions of problems (\ref{ch02-EQ02}) and (\ref{ch02-EQ05}). The first is the subdifferential $\partial f$ of the empirical risk. The second is the resolvent of the inverse subdifferential, defined as
\begin{equation}\label{ch02-EQ06}
J_{\alpha} := \left(\mathbf{I}+\alpha \left(\partial f\right)^{-1}\right)^{-1}.
\end{equation}
\noindent See the appendix for more details about these objects. The following result characterizes optimal solutions of problem (\ref{ch02-EQ02}) via a non-linear equation involving $J_{\alpha}$. The characterization also holds for non-differentiable loss functions, and is obtained without introducing constrained optimization problems. The proof of Theorem \ref{ch02-THM01} is given into the appendix.

\begin{thm}\label{ch02-THM01}
For any $\alpha > 0$, there exist optimal solutions of problem (\ref{ch02-EQ02}) such that
\begin{equation}\label{ch02-EQ07}
c = -J_{\alpha}(\alpha  \mathbf{K}c-c),
\end{equation}
\noindent where $J_{\alpha}$ is the resolvent of the inverse sub-differential $\left(\partial f\right)^{-1}$, see (\ref{ch02-EQ06}).
\end{thm}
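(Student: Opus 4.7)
The plan is to translate the first-order optimality condition for~(\ref{ch02-EQ02}) into a subdifferential inclusion, unwind the definition of the resolvent to see what~(\ref{ch02-EQ07}) really asserts, and finally use the null-space invariance of $F$ to pick out a specific optimizer that satisfies the fixed-point equation.

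First, since $f$ is finite-valued on all of $\mathbb{R}^{\ell}$ and $c \mapsto \mathbf{K}c$ is linear, the subdifferential chain rule (together with symmetry of $\mathbf{K}$) yields
\begin{equation*}
\partial F(c) = \mathbf{K}\,\partial f(\mathbf{K}c) + \mathbf{K}c.
\end{equation*}
By convexity, $c^\star$ is optimal if and only if $0 \in \partial F(c^\star)$, i.e., there exists $v^\star \in \partial f(\mathbf{K}c^\star)$ with $\mathbf{K}(v^\star + c^\star) = 0$. Existence of at least one optimizer follows from convexity of $F$, lower boundedness of $f$, and coercivity of the quadratic term on the orthogonal complement of $\ker \mathbf{K}$.

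Second, I would unwind~(\ref{ch02-EQ06}) to rewrite~(\ref{ch02-EQ07}). Since $\partial f$ is maximal monotone, $J_{\alpha}$ is single-valued on $\mathbb{R}^{\ell}$, and the equation $-c = J_{\alpha}(\alpha \mathbf{K}c - c)$ unpacks as $\alpha \mathbf{K}c - c \in -c + \alpha (\partial f)^{-1}(-c)$, which after cancellation and scaling collapses to $-c \in \partial f(\mathbf{K}c)$. Note that this inclusion is formally stronger than the optimality condition above, which only demands $v^\star + c^\star \in \ker \mathbf{K}$ rather than $v^\star = -c^\star$.

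The last step---and the main obstacle---is to bridge this gap. The key observation is that $F$ is invariant along $\ker \mathbf{K}$: both $f(\mathbf{K}c)$ and $c^T \mathbf{K} c$ depend on $c$ only through $\mathbf{K}c$. So, given any optimal $c^\star$ with associated subgradient $v^\star$, I would set $\hat{c} := -v^\star$. Since $v^\star + c^\star \in \ker \mathbf{K}$, one has $\mathbf{K}\hat{c} = -\mathbf{K}v^\star = \mathbf{K}c^\star$, hence $F(\hat{c}) = F(c^\star)$ and $\hat{c}$ is itself optimal. Moreover $-\hat{c} = v^\star \in \partial f(\mathbf{K}c^\star) = \partial f(\mathbf{K}\hat{c})$, so by the equivalence established above, $\hat{c}$ satisfies~(\ref{ch02-EQ07}) for every $\alpha > 0$. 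Thus, while~(\ref{ch02-EQ07}) need not be satisfied by every minimizer, it is always satisfied by the distinguished minimizer obtained by replacing $c^\star$ with $-v^\star$.
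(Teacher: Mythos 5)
Your proposal is correct and follows essentially the same route as the paper: both establish existence via coercivity on the range of $\mathbf{K}$, reduce the resolvent equation~(\ref{ch02-EQ07}) to the inclusion $-c \in \partial f(\mathbf{K}c)$, and bridge the gap with the weaker optimality condition $0 \in \mathbf{K}\left(\partial f(\mathbf{K}c^\star)+c^\star\right)$ by shifting $c^\star$ along $\ker\mathbf{K}$ to a distinguished minimizer. Your explicit choice $\hat{c}=-v^\star$ is exactly the point $c^\star - u_G$ that the paper constructs via its nullspace--range decomposition of $G(c^\star)=\partial f(\mathbf{K}c^\star)+c^\star$, so the two arguments coincide up to presentation.
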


\noindent The usefulness of condition (\ref{ch02-EQ07}) depends on the possibility of computing closed-form expressions for the resolvent, which may not be feasible for general convex functionals. Remarkably, for many learning methods one can typically exploit the specific structure of $f$ to work out closed-form expressions. For instance, when $f$ is additively separable as in (\ref{ch02-EQ03}), the sub-differential decouples with respect to the different components. In such a case, the computation of the resolvent reduces to the inversion of a function of a single variable, which can be often obtained in closed form. Indeed, in many supervised learning problems, additive separability holds, where $f_i(z_i) = \lambda^{-1} L(y_i,z_i)$,  $L:\mathbb{R} \times \mathbb{R} \rightarrow \mathbb{R}_+$ is a loss function, and $\lambda > 0$ is a regularization parameter. Table \ref{ch02-TAB01} reports the expression of the $J_{\alpha}$ in correspondence with commonly used loss functions. When $f$ is additively separable, the characterization (\ref{ch02-EQ07}) can be generalized as follows.

\begin{cor}\label{ch02-COR01}
Assume that (\ref{ch02-EQ03}) holds. Then, for any $\alpha_i > 0$, $i=1, \ldots, \ell$, there exist optimal solutions of problem (\ref{ch02-EQ02}) such that
\begin{equation}\label{ch02-EQ08}
c_i = -J_{\alpha_i}^i(\alpha_i k_i^T c -c_i), \qquad i=1,\ldots, \ell,
\end{equation}
\noindent where $J_{\alpha_i}^i$ are the resolvents of the inverse sub-differentials $\left(\partial f_i\right)^{-1}$, see (\ref{ch02-EQ06}).
\end{cor}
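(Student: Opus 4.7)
My plan is to reduce Corollary~\ref{ch02-COR01} to Theorem~\ref{ch02-THM01} by exploiting the product structure that additive separability induces on the subdifferential, which lets one choose the step size $\alpha_i$ independently in each coordinate.

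First, I would unpack the equation $c=-J_\alpha(\alpha\mathbf{K}c-c)$ from Theorem~\ref{ch02-THM01} into a subdifferential inclusion. By the definition of $J_\alpha$ in (\ref{ch02-EQ06}), this equation is equivalent to
\[
\alpha\mathbf{K}c-c\in\bigl(\mathbf{I}+\alpha(\partial f)^{-1}\bigr)(-c)=-c+\alpha(\partial f)^{-1}(-c),
\]
which simplifies to $\mathbf{K}c\in(\partial f)^{-1}(-c)$, i.e.\ $-c\in\partial f(\mathbf{K}c)$. Thus Theorem~\ref{ch02-THM01} guarantees an optimal $c$ for (\ref{ch02-EQ02}) satisfying the coordinate-free inclusion $-c\in\partial f(\mathbf{K}c)$; note that this inclusion no longer depends on $\alpha$, which is the key observation.

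Second, I would invoke additive separability (\ref{ch02-EQ03}). Because $f(z)=\sum_{i=1}^\ell f_i(z_i)$, the subdifferential factors as a Cartesian product of scalar subdifferentials,
\[
\partial f(z)=\partial f_1(z_1)\times\cdots\times\partial f_\ell(z_\ell),
\]
and the same factorization holds for the inverse. Hence $-c\in\partial f(\mathbf{K}c)$ decouples into $\ell$ scalar inclusions $-c_i\in\partial f_i((\mathbf{K}c)_i)=\partial f_i(k_i^Tc)$ for $i=1,\ldots,\ell$, equivalently $k_i^Tc\in(\partial f_i)^{-1}(-c_i)$.

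Third, for each $i$ separately I would choose an arbitrary $\alpha_i>0$, multiply the $i$-th inclusion by $\alpha_i$, and add $-c_i$ to both sides to obtain
\[
\alpha_i k_i^Tc-c_i\in\bigl(\mathbf{I}+\alpha_i(\partial f_i)^{-1}\bigr)(-c_i).
\]
Applying the single-valued resolvent $J_{\alpha_i}^i=(\mathbf{I}+\alpha_i(\partial f_i)^{-1})^{-1}$ (well-defined since $\partial f_i$ is maximal monotone on $\mathbb{R}$) and negating yields exactly (\ref{ch02-EQ08}).

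The only genuine work lies in Theorem~\ref{ch02-THM01}; once the coordinate-free inclusion $-c\in\partial f(\mathbf{K}c)$ is in hand, the corollary is essentially a bookkeeping argument. The main subtlety to watch is that the \emph{same} optimum $c$ must satisfy all $\ell$ equations simultaneously, which is automatic here because the $\alpha_i$ are introduced \emph{after} fixing an optimum satisfying the unique $\alpha$-free inclusion, rather than as part of the optimality characterization itself.
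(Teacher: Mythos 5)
Your proposal is correct and follows essentially the same route as the paper: the paper's proof starts from the intermediate inclusion $0\in\partial f(\mathbf{K}c)+c$ established inside the proof of Theorem~\ref{ch02-THM01}, decouples it coordinate-wise via additive separability, and then reintroduces the resolvents $J^i_{\alpha_i}$ with independent $\alpha_i$, exactly as you do. The only (harmless) difference is that you re-derive that $\alpha$-free inclusion from the statement of Theorem~\ref{ch02-THM01} by inverting the resolvent, rather than citing it directly, and your closing remark about the same optimum $c$ satisfying all $\ell$ equations simultaneously is precisely the point the paper relies on implicitly.
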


In this paper,  we analyze two iterative approaches to compute optimal solutions of problem (\ref{ch02-EQ02}), based on the solution characterizations of Theorem \ref{ch02-THM01} and Corollary \ref{ch02-COR01}. For both methods, we show that cluster points of the iteration sequence are optimal solutions, and we have
\begin{equation}\label{ch02-EQ09}
\min_{c \in \mathbb{R}^{\ell}} F(c) = \lim_{k\rightarrow + \infty} F(c^k),
\end{equation}
\noindent where $F$ denote the functional of problem (\ref{ch02-EQ02}). Section \ref{sec02} describes a first approach, which involves simply iterating equation (\ref{ch02-EQ07}) according to the fixed-point method. The method can be also regarded as a non-linear Jacobi algorithm to solve equation (\ref{ch02-EQ07}). It is shown that $\alpha$ can be always chosen so as to make the iterations approximate an optimal solution to arbitrary precision. In section \ref{sec03}, we describe a second approach, that involves separately iterating the single components using the characterization of equation (\ref{ch02-EQ08}). For a suitable choice of $\alpha_i$, the method boils down to coordinate descent, and optimality of cluster points holds whenever indices are picked according to an “essentially cyclical” rule. Equivalently, the method can be regarded as a non-linear Gauss-Seidel algorithm to solve (\ref{ch02-EQ08}).

\begin{table}\label{ch02-TAB01}
  \centering
  \begin{tabular}{|c|c|c|}
    \hline
    Name & Loss $L(y_1,y_2)$ & Operator $-J_{\alpha}(v)$ \\
    \hline
    L1-SVM & $\left(1-y_1y_2\right)_{+}$ & $y \odot \min\left((\alpha \lambda)^{-1},\left(1-y \odot v\right)_{+}\right)$ \\
    L2-SVM & $\left(1-y_1y_2\right)_{+}^2$ & $y \odot \left(1-y \odot v\right)_{+}/(1+\alpha \lambda)$\\
    RLS & $(y_1-y_2)^2/2$ & $\left(y-v\right)/(1+\alpha \lambda)$ \\
    RLA & $|y_1-y_2|$ & $\textrm{sign}(y-v) \odot \min\left((\alpha \lambda)^{-1},|y-v|\right)$ \\
    SVR & $\left(|y_1-y_2|-\epsilon\right)_{+}$ & $\textrm{sign}(y-v)\odot \min\left((\alpha \lambda)^{-1},\left(|y-v|-\epsilon\right)_{+}\right) $ \\
    \hline
  \end{tabular}
  \caption{Operator $-J_{\alpha}$ for different methods. Some of the losses are expressed using the “positive part” function defined as $(x)_{+} = \max\{0,x\}$. In the rightmost column, $\odot$ denotes the element-wise product, and functions are applied component-wise.}
\end{table}

\section{Fixed-point algorithms}  \label{sec02}

In this section, we suggest computing the optimal coefficient vector $c$ of problem (\ref{ch02-EQ02}) by simply iterating equation (\ref{ch02-EQ07}), starting from any initial condition $c^0$:
\begin{equation}\label{ch02-EQ10}
c^{k+1} = -J_{\alpha}(\alpha \mathbf{K}c^{k}-c^{k}).
\end{equation}

\noindent Such procedure is the well-known fixed point iteration (also known as Picard or Richardson iteration) method. Provided that $\alpha$ is properly chosen, the procedure can be used to solve problem (\ref{ch02-EQ02}) to any given accuracy. Before analyzing the convergence properties of method (\ref{ch02-EQ10}), let's study the computational complexity of a single iteration. To this end, one can decompose the iteration into three intermediate steps:
\begin{align*}
z^{k}   & = \mathbf{K}c^{k},                & \textbf{step 1}\\
v^{k}   & = \alpha z^{k}-c^{k},             & \textbf{step 2}\\
c^{k+1} & = -J_{\alpha}(v^{k}).             & \textbf{step 3}
\end{align*}

\noindent The decomposition emphasize the separation between the role of the kernel (affecting only step 1) and the role of the function $f$ (affecting only step 3).

\subsection*{Step 1}

Step one is the only one that involves the kernel matrix. Generally, it is also the most computationally and memory demanding step. Since $z = \mathbf{K}c$ represents predictions on training inputs (or a quantity related to them), it holds that being able to perform fast predictions also have a crucial impact on the training time. This is remarkable, since good prediction speed is a desirable goal on its own. Notice that an efficient implementation of the prediction step is beneficial for any learning method of the form (\ref{ch02-EQ02}), independently of $f$. Ideally, the computational cost of such matrix-vector multiplication is $O(\ell^2)$. However, the kernel matrix might not fit into the memory, so that the time needed to compute the product might also include special computations or additional I/O operations. Observe that, if many components of vector $c$ are null, only a subset of the rows of the kernel matrix is necessary in order to compute the product. Hence, methods that impose sparsity in vector $c$ may produce a significant speed-up in the prediction step. As an additional remark, observe that the matrix-vector product is an operation that can be easily parallelized.

In the linear case (\ref{ch02-EQ05}), the computation of $z^k$ can be divided in two parts:
\begin{align*}
w^{k} & = \mathbf{X}^Tc^{k}, \\
z^{k} & = \mathbf{X}w^{k}.
\end{align*}

\noindent In order to compute the product, it is not even necessary to form the kernel matrix, which may yields a significant memory saving. The two intermediate products both need $O(n\ell)$ operations and the overall cost still scales with $O(n\ell)$. When the number of features is much lower than the number of examples ($n \ll \ell$), there's a significant improvement with respect to $O(\ell^2)$. Speed-up and memory saving are even more dramatic when $\mathbf{X}$ is sparse. In such a case, computing the product in two steps might be more convenient also when $n > \ell$.

\subsection*{Step 2}

Step two is a simple subtraction between vectors, whose computational cost is $O(\ell)$. In section \ref{sec05}, it is shown that $v = \alpha \mathbf{K}c -c$ can be interpreted as the vector of predictions on the training inputs associated with another learning problem consisting in stabilizing a functional regularized whose empirical risk is always differentiable, and whose kernel is not necessarily positive.

\subsection*{Step 3}

Step three is the only one that depends on function $f$. Hence, different algorithms can be implemented by simply choosing different resolvents $J_{\alpha}$. Table \ref{ch02-TAB01} reports the loss function $L$ and the corresponding resolvent for some common supervised learning methods. Some examples are given below. Consider problem (\ref{ch02-EQ02}) with the “hinge” loss function $L(y_1,y_2) = \left(1-y_1y_2\right)_{+}$, associated with the popular Support Vector Machine (SVM). For SVM, step three reads
\[
c^{k+1} = y \odot \min\left(\frac{1}{\alpha \lambda},\left(1-y \odot v^{k}\right)_{+}\right),
\]
\noindent where $\odot$ denotes the element-wise product, and $\min$ is applied element-wise. As a second example, consider classic regularized least squares (RLS). In this case, step three reduces to
\[
c^{k+1} = \frac{y-v^{k}}{1+\alpha \lambda}.
\]
\noindent Generally, the complexity of step three is $O(\ell)$ for any of the classical loss functions.

\subsection{Convergence}

The following result states that the sequence generated by the iterative procedure (\ref{ch02-EQ10}) can be used to approximately solve problem (\ref{ch02-EQ02}) to any precision, provided that $\alpha$ is suitably chosen.

\begin{thm}\label{ch02-THM02}
If the sequence $c^k$ is generated according to algorithm (\ref{ch02-EQ10}), and
\begin{equation}\label{ch02-EQ11}
0 < \alpha < \frac{2}{\|\mathbf{K}\|_{2}},
\end{equation}
\noindent then (\ref{ch02-EQ09}) holds. Moreover, $c^k$ is bounded, and any cluster point is a solution of problem (\ref{ch02-EQ02}).
\end{thm}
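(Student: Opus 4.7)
The plan is to reinterpret (\ref{ch02-EQ10}) as a forward--backward (proximal gradient) iteration and exploit the theory of averaged nonexpansive operators. Setting $d^k := -c^k$ and using the identity $(\partial f)^{-1} = \partial f^*$, the recursion becomes
\[
d^{k+1} = J_\alpha\bigl(d^k - \alpha \mathbf{K} d^k\bigr),
\]
that is, the classical forward--backward step with prox $J_\alpha = \mathrm{prox}_{\alpha f^*}$ and smooth quadratic $\tfrac{1}{2} d^T \mathbf{K} d$, whose gradient is $\|\mathbf{K}\|_2$-Lipschitz. Fixed points of the associated map $T(c) := -J_\alpha(\alpha \mathbf{K} c - c)$ solve (\ref{ch02-EQ07}), and by Theorem \ref{ch02-THM01} they are optimal solutions of (\ref{ch02-EQ02}), so in particular $\mathrm{Fix}(T) \ne \emptyset$.

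The first key step is to show that $T$ is averaged. As the resolvent of the maximal monotone operator $\partial f^*$, the map $J_\alpha$ is firmly nonexpansive, that is, $\tfrac{1}{2}$-averaged. As for the forward step, since $\mathbf{K}\succeq 0$ and (\ref{ch02-EQ11}) holds, the symmetric matrix $I - \alpha \mathbf{K}$ has spectrum contained in $(-1,1]$, and one verifies directly that $I - \alpha \mathbf{K} = \bigl(1 - \tfrac{\alpha\|\mathbf{K}\|_2}{2}\bigr) I + \tfrac{\alpha\|\mathbf{K}\|_2}{2}\bigl(I - \tfrac{2}{\|\mathbf{K}\|_2}\mathbf{K}\bigr)$ with the second factor nonexpansive, so $I - \alpha \mathbf{K}$ is $(\alpha\|\mathbf{K}\|_2/2)$-averaged with parameter strictly below $1$. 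The standard composition rule for averaged operators then produces $\mu \in (0,1)$ such that $T = (1-\mu) I + \mu R$ with $R$ nonexpansive.

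From this, I would pick any $c^\star \in \mathrm{Fix}(T)$ and apply the Krasnosel'skii--Mann identity to obtain
\[
\|c^{k+1} - c^\star\|_2^2 \le \|c^k - c^\star\|_2^2 - \frac{1-\mu}{\mu}\,\|c^{k+1} - c^k\|_2^2.
\]
Two consequences follow immediately: $\{c^k\}$ is bounded (establishing the second assertion), and telescoping yields $\sum_k \|c^{k+1} - c^k\|_2^2 < \infty$, in particular $c^{k+1} - c^k \to 0$. Continuity of $T$, inherited from the nonexpansiveness of $J_\alpha$, then forces every cluster point of $\{c^k\}$ to satisfy $T(c) = c$, and Theorem \ref{ch02-THM01} identifies such points with optimal solutions of (\ref{ch02-EQ02}), proving the third assertion.

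It remains to deduce (\ref{ch02-EQ09}). Since $F$ is continuous, $\{c^k\}$ is bounded, and every cluster point of $\{c^k\}$ attains the value $\min F$, every accumulation value of the bounded real sequence $\{F(c^k)\}$ equals $\min F$, forcing $F(c^k) \to \min F$. The main technical obstacle is the averagedness step: one must carefully exploit $\mathbf{K} \succeq 0$ and (\ref{ch02-EQ11}) to obtain strict averagedness of $I - \alpha\mathbf{K}$ (mere nonexpansiveness would not suffice for Krasnosel'skii--Mann), and then invoke the composition rule to produce $\mu < 1$. Once this is in place, the rest is a routine combination of Fejér monotonicity and a continuity argument; a self-contained alternative is to rederive the Fejér-type inequality directly from firm nonexpansiveness of $J_\alpha$, bypassing explicit averagedness formulas.
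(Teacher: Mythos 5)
Your argument is correct, but it follows a genuinely different route from the paper. The paper's proof is elementary and self-contained: it picks an optimal $c^*$ satisfying (\ref{ch02-EQ07}), splits $r^k = c^k - c^*$ into its nullspace and range components $u^k + v^k$ with respect to $\mathbf{K}$, and uses only the nonexpansiveness of $J_{\alpha}$ together with the spectral bound $0 \le \alpha\alpha_i < 2$ to obtain the explicit decrement $\|r^{k+1}\|_2^2 \le \|r^k\|_2^2 - \beta\|v^k\|_2^2$ with $\beta = \min_{\alpha_i>0}\alpha\alpha_i(2-\alpha\alpha_i) > 0$; summing gives $\mathbf{K}r^k \to 0$, whence (\ref{ch02-EQ09}) via Lemma \ref{chA2-LEM01} and boundedness via monotonicity of $\|r^k\|_2$. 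You instead recognize (\ref{ch02-EQ10}) as a forward--backward step on the dual variable $d = -c$ and invoke averagedness plus the Krasnosel'skii--Mann inequality. What your route buys is conceptual clarity, a connection to standard operator-splitting theory, and in fact a \emph{stronger} conclusion than the theorem asserts: your Fej\'er-type inequality yields $c^{k+1}-c^k \to 0$ and, since a Fej\'er monotone sequence with a cluster point in $\mathrm{Fix}(T)$ converges to that point in finite dimensions, full-sequence convergence of $c^k$ without the extra hypotheses of Theorem \ref{ch02-THM03} (though with no rate). What the paper's route buys is that it needs nothing beyond nonexpansiveness of the resolvent and elementary linear algebra, and it quantifies the per-step decrease explicitly. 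One small point you should make explicit: you use that \emph{every} solution of (\ref{ch02-EQ07}) is optimal, whereas Theorem \ref{ch02-THM01} as stated only guarantees the existence of optimal solutions satisfying (\ref{ch02-EQ07}); the converse does hold, because the chain of equivalences in the paper's proof of that theorem is reversible and $0 \in \partial f(\mathbf{K}c) + c$ implies $0 \in \mathbf{K}\left(\partial f(\mathbf{K}c) + c\right) = \partial F(c)$, but this one-line remark is needed to close your identification of $\mathrm{Fix}(T)$ with optimal solutions.
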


A stronger convergence result holds when the kernel matrix is strictly positive or $f$ is differentiable with Lipschitz continuous gradient. Under these conditions, it turns out that the whole sequence $c^k$ converges at least linearly to an unique fixed point.

\begin{thm}\label{ch02-THM03}
Suppose that the sequence $c^k$ is generated according to algorithm (\ref{ch02-EQ10}), where $\alpha$ satisfy (\ref{ch02-EQ11}), and one of the following conditions holds:
\begin{enumerate}
  \item The kernel matrix $\mathbf{K}$ is positive definite.
  \item Function $f$ is everywhere differentiable and $\nabla f$ is Lipschitz continuous,
\end{enumerate}
\noindent Then, there exists a unique solution $c^*$ of equation (\ref{ch02-EQ07}), and $c^k$ converges to $c^*$ with the following rate
\[
\|c^{k+1}-c^*\|_2 \leq \mu \|c^k-c^*\|_2, \qquad 0 \leq \mu < 1.
\]
\end{thm}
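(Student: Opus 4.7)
The plan is to rewrite (\ref{ch02-EQ10}) as the fixed-point iteration $c^{k+1}=T(c^k)$ for the operator $T(c):=-J_\alpha((\alpha\mathbf{K}-\mathbf{I})c)$ and then prove that, under either hypothesis, $T$ is a strict contraction on $\mathbb{R}^\ell$. The Banach contraction principle will then deliver at once the uniqueness of the fixed point $c^*$ of (\ref{ch02-EQ07}) and the geometric rate; optimality of $c^*$ for problem (\ref{ch02-EQ02}) is inherited from Theorem \ref{ch02-THM02}.

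The first step is a Lipschitz estimate valid in both cases. Because $(\partial f)^{-1}$ is maximal monotone, its resolvent $J_\alpha$ is firmly nonexpansive, hence $1$-Lipschitz, so
\[
\|T(c)-T(c')\|_2 \leq \|(\alpha\mathbf{K}-\mathbf{I})(c-c')\|_2 \leq \|\alpha\mathbf{K}-\mathbf{I}\|_2\,\|c-c'\|_2.
\]
Since $\mathbf{K}$ is symmetric, the eigenvalues of $\alpha\mathbf{K}-\mathbf{I}$ lie in $[-1,\alpha\|\mathbf{K}\|_2-1]$, so (\ref{ch02-EQ11}) already yields $\|\alpha\mathbf{K}-\mathbf{I}\|_2\leq 1$. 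Under assumption (1), $\lambda_{\min}(\mathbf{K})>0$, which pushes the spectrum of $\alpha\mathbf{K}-\mathbf{I}$ strictly inside $(-1,1)$ and hence produces a contraction constant $\mu:=\|\alpha\mathbf{K}-\mathbf{I}\|_2<1$ directly, without any additional regularity of $f$.

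Under assumption (2) the previous bound need not be strict, so the contractivity must come entirely from $J_\alpha$. Here I appeal to the Baillon--Haddad theorem: Lipschitz continuity of $\nabla f$ with constant $L$ is equivalent to $(1/L)$-strong convexity of the conjugate $f^*$, whence $\partial f^*=(\partial f)^{-1}$ is $(1/L)$-strongly monotone. A routine estimate on resolvents of strongly monotone operators (starting from $v-J_\alpha(v)\in\alpha(\partial f)^{-1}(J_\alpha(v))$ and using strong monotonicity plus Cauchy--Schwarz) then upgrades $J_\alpha$ to a strict contraction with constant $1/(1+\alpha/L)$, and taking $\mu:=1/(1+\alpha/L)$ gives the required bound on $\|T(c)-T(c')\|_2$.

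The main obstacle is this second case: one cannot rely on spectral strictness of $\alpha\mathbf{K}-\mathbf{I}$, so contractivity has to be extracted from the smoothness of $f$ via Baillon--Haddad (equivalently, via strong convexity of $f^*$). Once $T$ is shown to be a contraction in either scenario, Banach's theorem yields a unique $c^*$ solving (\ref{ch02-EQ07}) together with the stated decay $\|c^{k+1}-c^*\|_2\leq\mu\|c^k-c^*\|_2$.
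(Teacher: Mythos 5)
Your proposal is correct and follows the same overall strategy as the paper: write the iteration as $c^{k+1}=T(c^k)$, bound the Lipschitz modulus of $T$ by the product of the modulus of $J_{\alpha}$ and $\|\alpha\mathbf{K}-\mathbf{I}\|_2$, show that one of the two factors is strictly less than one under the corresponding hypothesis, and conclude with the contraction mapping theorem (Theorem \ref{chA1-THM01}). Your treatment of case (1) is identical to the paper's. The only divergence is the sub-lemma for case (2): you obtain strict contractivity of $J_{\alpha}$ by invoking Baillon--Haddad, i.e.\ $(1/L)$-strong monotonicity of $(\partial f)^{-1}=\partial f^*$, and the standard resolvent estimate for strongly monotone operators, which yields the modulus $1/(1+\alpha/L)$; the paper instead bounds $\| J_{\alpha}^{-1}(x_1)-J_{\alpha}^{-1}(x_2)\|_2$ from below, using only the lower Lipschitz bound $\|(\nabla f)^{-1}(x_1)-(\nabla f)^{-1}(x_2)\|_2\geq \|x_1-x_2\|_2/L$ together with monotonicity of $(\nabla f)^{-1}$, arriving at the modulus $(1+1/L^2)^{-1/2}$. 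Both routes are valid and deliver a contraction; yours gives a sharper constant and correctly tracks the dependence on $\alpha$ (the paper's displayed computation omits the factor $\alpha$ that should multiply $(\nabla f)^{-1}$ inside $J_{\alpha}^{-1}=\mathbf{I}+\alpha(\partial f)^{-1}$, so its $\mu_2$ should read $(1+\alpha^2/L^2)^{-1/2}$). Deferring the optimality of $c^*$ to Theorem \ref{ch02-THM02} is consistent with what the statement actually claims. No gaps.
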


\noindent In practice, condition (\ref{ch02-EQ11}) can be equivalently satisfied by fixing $\alpha = 1$ and scaling the kernel matrix to have spectral norm between 0 and 2. In problems that involve a regularization parameter, this last choice will only affect its scale. A possible practical rule to choose the value of $\alpha$ is $\alpha= 1/\|\mathbf{K}\|_{2}$, which is equivalent to scale the kernel matrix to have spectral norm equal to one. However, in order to compute the scaling factor in this way, one generally needs all the entries of the kernel matrix. A cheaper alternative that uses only the diagonal entries of the kernel matrix is $\alpha = 1/\textrm{tr}(\mathbf{K})$, which is equivalent to fix $\alpha$ to one and normalizing the kernel matrix to have trace one. To see that this last rule satisfy condition (\ref{ch02-EQ11}), observe that the trace of a positive semidefinite matrix is an upper bound for the spectral norm. In the linear case (\ref{ch02-EQ05}), one can directly compute $\alpha$ on the basis of the data matrix $\mathbf{X}$. In particular, we have $\|\mathbf{K}\|_{2} = \|\mathbf{X}\|_{2}^2$, and $\textrm{tr}(\mathbf{K}) = \|\mathbf{X}\|_{F}^2$, where $\| \cdot \|_{F}$ denotes the Frobenius norm.

\section{Coordinate-wise iterative algorithms} \label{sec03}

In this section, we describe a second optimization approach that can be seen as a way to iteratively enforce optimality condition (\ref{ch02-EQ08}). Throughout the section, it is assumed that $f$ is additively separable as in (\ref{ch02-EQ03}). In view of Corollary \ref{ch02-COR01}, the optimality condition can be rewritten for a single component as in (\ref{ch02-EQ08}). Consider the following general update algorithm:
\begin{equation}\label{ch02-EQ12}
c_{i}^{k+1} = -J^i_{\alpha_i}(\alpha_i k_{i}^T c^k-c_{i}^k), \qquad i=1,\ldots,\ell.
\end{equation}

\noindent A serial implementation of algorithm (\ref{ch02-EQ10}) can be obtained by choosing $\alpha_i = \alpha$ and by cyclically computing the new components $c_{i}^{k+1}$ according to equation (\ref{ch02-EQ12}). Observe that this approach requires to keep in memory both $c^k$ and $c^{k+1}$ at a certain time. In the next sub-section, we analyze a different choice of parameters $\alpha_i$ that leads to a class of coordinate descent algorithms, based on the principle of using new computed information as soon as it is available.

\subsection{Coordinate descent methods}

\begin{algorithm}
\caption{Coordinate descent for regularized kernel methods} \label{ch02-ALG01}
\begin{algorithmic}
\WHILE{$\max_{i} |h_i| \geq \delta$}
\STATE Pick a coordinate index $i$ according to some rule,  \\
\STATE $z_i^k = k_{i}^T c^k$,                               \\
\STATE $v_i^k = z_i^k/k_{ii}-c_i^k$,                        \\
\STATE $\textrm{tmp} = S_i(v_i^k)$,                         \\
\STATE $h_i = \textrm{tmp} -c_i^k$,                         \\
\STATE $c_i^{k+1} = \textrm{tmp}$,                          \\
\ENDWHILE
\end{algorithmic}
\end{algorithm}

A coordinate descent algorithm updates a single variable at each iteration by solving a sub-problem of dimension one. During the last years, optimization via coordinate descent is becoming a popular approach in machine learning and statistics, since its implementation is straightforward and enjoys favorable computational properties \citep{Friedman07, Tseng08, Wu08, Chang08, Hsieh08, Yun09, Huang10, Friedman10}. Although the method may require many iterations to converge, the specific structure of supervised learning objective functionals allows to solve the sub-problems with high efficiency. This makes the approach competitive especially for large-scale problems, in which memory limitations hinder the use of second order optimization algorithms. As a matter of fact, state of the art solvers for large scale supervised learning such as \texttt{glmnet} \citep{Friedman10} for generalized linear models, or \texttt{LIBLINEAR} \citep{Fan08} for SVMs are based on coordinate descent techniques.

The update for $c_i^k$ in algorithm (\ref{ch02-EQ12}) also depends on components $c_j^k$ with $j < i$ which have already been updated. Hence, one needs to keep in memory coefficients from two subsequent iterations $c^{k+1}$ and $c^{k}$. In this sub-section, we describe a method that allows to take advantage of the computed information as soon as it is available, by overwriting the coefficients with the new values. Assume that the diagonal elements of the kernel matrix are strictly positive, i.e. $k_{ii} > 0$. Notice that this last assumption can be made without any loss of generality. Indeed, if $k_{ii} = 0$ for some index $i$ then, in view of the inequality $|k_{ij}| \leq \sqrt{k_{ii} k_{jj}}$, it follows that $k_{ij} = 0$ for all $j$. Hence, the whole $i$-th column (row) of the kernel matrix is zero, and can be removed without affecting optimization results for the other coefficients. By letting $\alpha_i = 1/k_{ii}$ and $S_i := -J^i_{(k_{ii})^{-1}}$ in equation (\ref{ch02-EQ08}), the $i$-th coefficient in the inner sum does cancel out, and we obtain
\begin{equation}\label{ch02-EQ13}
c_{i} = S_i\left(\sum_{j \neq i} \frac{k_{ij}}{k_{ii}}c_j\right).
\end{equation}

\noindent The optimal $i$-th coefficient is thus expressed as a function of the others. Similar characterizations have been also derived in \citep{Dinuzzo09} for several loss functions. Equation (\ref{ch02-EQ13}) is the starting point to obtain a variety of coordinate descent algorithms involving the iterative choice of a a coordinate index $i$ followed by the optimization of $c_i$ as a function of the other coefficients. A simple test on the residual of equation (\ref{ch02-EQ13}) can be used as a stopping condition. The approach can be also regarded as a non-linear Gauss-Seidel method \citep{Ortega00} for solving the equations (\ref{ch02-EQ08}). It is assumed that vector $c$ is initialized to some initial $c^0$, and coefficients $h_i$ are initialized to the residuals of equation (\ref{ch02-EQ13}) evaluated in correspondence with $c^0$. Remarkably, in order to implement the method for different loss functions, we simply need to modify the expression of functions $S_i$. Each update only involves a single row (column) of the kernel matrix. In the following, we will assume that indices are recursively picked according to a rule that satisfy the following condition, see \citep{Tseng01, Luenberger08}.

\paragraph{Essentially Cyclic Rule.} There exists a constant integer $T > \ell$ such that every index $i \in \{1, \ldots , \ell \}$ is chosen at least once between the $k$-th iteration and the $(k+T-1)$-th, for all $k$.\\

Iterations of coordinate descent algorithms that use an essentially cyclic rule can be grouped in \emph{macro-iterations}, containing at most $T$ updates of the form (\ref{ch02-EQ13}), within which all the indices are picked at least once. Below, we report some simple rules that satisfy the essentially cyclic condition and don't require to maintain any additional information (such as the gradient):
\begin{enumerate}
  \item \textbf{Cyclic rule:} In each macro-iteration, each index is picked exactly once in the order $1, \ldots, \ell$. Hence, each macro-iteration consists exactly of $\ell$ iterations.
  \item \textbf{Aitken double sweep rule:} Consists in alternating macro-iterations in which indices are chosen in the natural order $1, \ldots, \ell$ with macro-iterations in the reverse order, i.e. $(\ell-1), \ldots, 1$.
  \item \textbf{Randomized cyclic rule:} The same as the cyclic rule, except that indices are randomly permuted at each macro-iteration.
\end{enumerate}

\noindent In the linear case (\ref{ch02-EQ05}), $z_i^k$ can be computed as follows
\begin{align*}
w^k         & = \mathbf{X} c^k,  \\
z_i^k       & = x_i^T w^k.
\end{align*}
\noindent By exploiting the fact that only one component of vector $c$ changes from an iteration to the next, the first equation can be further developed:
\[
w^k = \mathbf{X}^T c^k  = w^{k-1}+ (\mathbf{X}^T e_{p}) h_{p} = w^{k-1} + x_{p} h_{p}
\]
\noindent where $p$ denotes the index chosen in the previous iteration, and $h_{p}$ denotes the variation of coefficient $c_p$ in the previous iteration. By introducing these new quantities, the coordinate descent algorithm can be rewritten as in Algorithm \ref{ch02-ALG02}, where we have set $S_i := -J^i_{\|x_i\|_2^{-2}}$.

\begin{algorithm}
\caption{Coordinate descent (linear kernel)} \label{ch02-ALG02}
\begin{algorithmic}
\WHILE{$\max_{i} |h_i| \geq \delta$}
\STATE Pick a coordinate index $i$ according to some rule,          \\
\IF{$h_p \neq 0$}
\STATE $w^k  = w^{k-1}+ x_{p} h_{p}$,                       \\
\ENDIF
\STATE $z_i^k = x_i^T w^k$,                                 \\
\STATE $v_i^k = z_i^k/\|x_i\|_2^2-c_i^k$,                     \\
\STATE $\textrm{tmp} = S_i(v_i^{k})$,                       \\
\STATE $h_i = \textrm{tmp} -c_i^k$,                         \\
\STATE $c_i^{k+1} = \textrm{tmp}$,                          \\
\STATE $p = i$
\ENDWHILE
\end{algorithmic}
\end{algorithm}

The computational cost of a single iteration depends mainly on the updates for $w$ and $z_i$, and scales linearly with the number of features, i.e. $O(n)$. When the loss function have linear traits, it is often the case that coefficient $c_i$ doesn't change after the update, so that $h_i = 0$. When this happen, the next update of $w$ can be skipped, obtaining a significant speed-up. Further, if the vectors $x_i$ are sparse, the average computational cost of the second line may be much lower than $O(n)$. A technique of this kind has been proposed in \citep{Hsieh08} and implemented in the package \texttt{LIBLINEAR} \citep{Fan08} to improve speed of coordinate descent iterations for linear SVM training. Here, one can see that the same technique can be applied to any convex loss function, provided that an expression for the corresponding resolvent is available.

The main convergence result for coordinate descent is stated below. It should be observed that the classical theory of convergence for coordinate descent is typically formulated for differentiable objective functionals. When the objective functional is not differentiable, there exist counterexamples showing that the method may get stuck in a non-stationary point \citep{Auslender76}. In the non-differentiable case, optimality of cluster points of coordinate descent iterations has been proven in \citep{Tseng01} (see also references therein), under the additional assumption that the non-differentiable part is additively separable. Unfortunately, the result of \citep{Tseng01} cannot be directly applied to problem (\ref{ch02-EQ02}), since the (possibly) non-differential part $f(\mathbf{K}c)$ is not separable with respect to the optimization variables $c_i$, even when (\ref{ch02-EQ03}) holds. Notice also that, when the kernel matrix is not strictly positive, level sets of the objective functional are unbounded (see Lemma \ref{chA2-LEM01} in the appendix). Despite these facts, it still holds that cluster points of coordinate descent iterations are optimal, as stated by the next Theorem.

\begin{thm}\label{ch02-THM04}
Suppose that the following conditions hold:
\begin{enumerate}
  \item Function $f$ is additively separable as in (\ref{ch02-EQ03}),
  \item The diagonal entries of the kernel matrix satisfy $k_{ii} > 0$,
  \item The sequence $c^k$ is generated by the coordinate descent algorithm (Algorithm \ref{ch02-ALG01} or \ref{ch02-ALG02}), where indices are recursively selected according to an essentially cyclic rule.
\end{enumerate}
\noindent Then, (\ref{ch02-EQ09}) holds, $c^k$ is bounded, and any cluster point is a solution of problem (\ref{ch02-EQ02}).
\end{thm}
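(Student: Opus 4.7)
My plan is to recast the iteration as exact coordinate descent on the Fenchel dual of (\ref{ch02-EQ02}), where the objective splits as a smooth quadratic plus a coordinate-separable nonsmooth part, and then invoke a classical convergence theorem for essentially cyclic coordinate descent on objectives of that form, transferring the conclusion back to the primal.

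To set this up, I introduce
\begin{equation*}
D(u) := \sum_{i=1}^{\ell} f_i^*(u_i) + \frac{u^T \mathbf{K} u}{2},
\end{equation*}
where $f_i^*$ is the Fenchel conjugate of $f_i$. Because $f$ is finite-valued on $\mathbb{R}^\ell$, Fenchel--Rockafellar duality gives $\min F + \min D = 0$ together with the saddle-point relation $u^\star = -c^\star$, equivalent componentwise to $-c_i^\star \in \partial f_i\left((\mathbf{K}c^\star)_i\right)$. The one-dimensional subproblem $\min_{u_i} D(u)$ at fixed $\{u_j\}_{j\neq i}$ has optimality condition $-(\mathbf{K}u)_i \in \partial f_i^*(u_i)$; using $\partial f_i^* = (\partial f_i)^{-1}$ and the relabeling $u = -c$, this becomes $-c_i^{k+1} \in \partial f_i\left((\mathbf{K}c')_i\right)$, where $c'$ is the current vector with its $i$-th entry replaced by $c_i^{k+1}$. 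Unpacking the resolvent definition shows that this is precisely the update $c_i^{k+1} = -J^i_{1/k_{ii}}(v_i^k)$ of Algorithms~\ref{ch02-ALG01} and \ref{ch02-ALG02}. Hence those algorithms are nothing other than exact Gauss--Seidel on $D$ under $u^k = -c^k$.

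With this identification I apply the convergence result of \citep{Tseng01} for essentially cyclic coordinate descent on smooth-convex-plus-separable-convex objectives, and verify its hypotheses. \emph{Coercivity of $D$:} finite-valuedness of $f$ makes each $f_i^*$ $1$-coercive and bounded below (by $-f_i(0)$), so $D(u)\to\infty$ as $\|u\|\to\infty$ and sublevel sets of $D$ are compact. \emph{Strong convexity of each subproblem:} since $k_{ii}>0$, the $u_i$-subproblem has modulus $k_{ii}$, so it admits a unique minimizer depending continuously on the frozen coordinates and yields the sufficient-decrease bound
\begin{equation*}
D(u^{k+1}) \leq D(u^k) - \frac{k_{ii_k}}{2}\left|u_{i_k}^{k+1} - u_{i_k}^k\right|^2.
\end{equation*}
Telescoping gives $\sum_k \|u^{k+1}-u^k\|^2 <\infty$ and hence $\|u^{k+1}-u^k\|\to 0$; combined with the essentially cyclic rule (each index revisited within $T$ iterations) and the continuous dependence of each one-dimensional argmin on the frozen coordinates, every cluster point $u^\infty$ of $\{u^k\}$ is a coordinate-wise minimizer of $D$. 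For smooth-plus-separable convex $D$ this forces $0\in\partial D(u^\infty)$, so $u^\infty$ is a global minimizer of $D$.

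Transferring back, $c^k=-u^k$ is bounded, and any cluster point $c^\infty$ satisfies (\ref{ch02-EQ08}) with $\alpha_i = 1/k_{ii}$, hence solves (\ref{ch02-EQ02}) by Corollary~\ref{ch02-COR01}. Since $F$ is continuous on $\mathbb{R}^\ell$, boundedness of $c^k$ together with optimality of every cluster point yields (\ref{ch02-EQ09}) by a standard two-subsequence argument. The hard part will be the equivalence between the paper's update and exact coordinate descent on the dual, which rests on careful use of the identities $\partial f^*=(\partial f)^{-1}$ and $J_\alpha = (\mathbf{I}+\alpha(\partial f)^{-1})^{-1}$; a secondary subtlety is that boundedness of $\{c^k\}$ cannot come from coercivity of $F$ (which fails when $\mathbf{K}$ is singular, as already noted) and must instead be extracted on the dual side from the $1$-coercivity of $f^*$.
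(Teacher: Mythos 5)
Your plan is correct, but it is a genuinely different proof from the one in the paper. The paper stays entirely in the primal: it applies Zangwill's convergence theorem (Theorem \ref{chA1-THM02}) to the macro-iterations, takes $F$ itself as the descent function by viewing each update as an exact line search on $F$, proves closedness of the algorithmic map from Lipschitz continuity of the resolvents, and establishes boundedness of $c^k$ by a bespoke estimate: square-summability of the step sizes plus the inclusion $c_i^{k}\in-\partial f_i(\cdot)$ evaluated at an argument controlled through the range component of $c^k$ and the bound $F(c^k)\le F(c^0)$. You instead observe that the update $c_i^{k+1}=S_i(v_i^k)=-J^i_{1/k_{ii}}(v_i^k)$ is exactly the inclusion $-c_i^{k+1}\in\partial f_i(k_i^Tc')$ with $c'$ carrying the new $i$-th entry, i.e.\ exact coordinate minimization of $D(u)=\sum_i f_i^*(u_i)+\tfrac{1}{2}u^T\mathbf{K}u$ under $u=-c$; I checked this identification by unwinding $J_\alpha=(\mathbf{I}+\alpha(\partial f)^{-1})^{-1}$ with $\alpha=1/k_{ii}$, and it is right --- it is the crux, and it is the accurate way to see the update as an exact one-dimensional minimization (of the dual subproblem, not of $t\mapsto F(c+e_it)$). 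This buys two things the paper has to work for: the objective becomes smooth-plus-\emph{separable} in $u$, so the obstruction the paper itself raises against applying \citep{Tseng01} in the primal disappears; and boundedness of the iterates falls out of supercoercivity of each $f_i^*$ (equivalent to finite-valuedness of $f_i$) rather than from the delicate primal estimate. Three small repairs are needed: (i) Corollary \ref{ch02-COR01} is stated in the direction ``optimal $\Rightarrow$ satisfies (\ref{ch02-EQ08})''; for the converse you should pass through the sufficient condition (\ref{chA2-EQ01}), since $0\in\partial f(\mathbf{K}c)+c$ implies $0\in\mathbf{K}\left(\partial f(\mathbf{K}c)+c\right)=\partial F(c)$; (ii) $D(u^0)$ may equal $+\infty$ when $u^0\notin\mathrm{dom}\,f^*$ (e.g.\ hinge loss), so the sufficient-decrease telescoping should start after the first macro-iteration, by which time every coordinate lies in its domain; (iii) when invoking \citep{Tseng01} you should record explicitly that the one-dimensional subproblems have unique minimizers (from $k_{ii}>0$) and that $D$ has compact sublevel sets, which are the hypotheses that make the cluster-point conclusion available under the essentially cyclic rule. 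None of these affects the soundness of the approach.
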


\section{A reformulation theorem}  \label{sec05}

The following result shows that solutions of problem (\ref{ch02-EQ02}) satisfying equation (\ref{ch02-EQ08}) are also stationary points of a suitable family of differentiable functionals.

\begin{thm}\label{ch02-THM05}
If $c$ satisfy (\ref{ch02-EQ07}), then it is also a stationary point of the following functional:
\[
F_{\alpha}(c) = \alpha^{-1} f_{\alpha}(\mathbf{K}_{\alpha}c)+\frac{c^T\mathbf{K}_{\alpha} c}{2},
\]
\noindent where $f_{\alpha}$ denotes the Moreau-Yosida regularization of $f$, and $\mathbf{K}_{\alpha} := \alpha \mathbf{K}- \mathbf{I}$.
\end{thm}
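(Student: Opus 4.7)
The plan is to verify $\nabla F_\alpha(c)=0$ directly, taking advantage of the fact that the Moreau--Yosida regularization $f_\alpha$ is everywhere differentiable with Lipschitz gradient (regardless of whether $f$ itself is), so that $F_\alpha$ is a smooth functional and ``stationary point'' can be interpreted in the classical sense. This is also the conceptual payoff of the theorem: it trades the positive semi-definiteness of $\mathbf{K}$ against differentiability of the loss, exactly as announced in the introduction.

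First I would compute the gradient. Since $\mathbf{K}_\alpha=\alpha\mathbf{K}-\mathbf{I}$ is symmetric, the chain rule gives
\[
\nabla F_\alpha(c) \;=\; \alpha^{-1}\mathbf{K}_\alpha\,\nabla f_\alpha(\mathbf{K}_\alpha c) + \mathbf{K}_\alpha c \;=\; \mathbf{K}_\alpha\bigl(\alpha^{-1}\nabla f_\alpha(\mathbf{K}_\alpha c) + c\bigr).
\]
The task therefore reduces to showing that the bracketed expression vanishes at any $c$ satisfying (\ref{ch02-EQ07}). The central ingredient is the identity
\[
\alpha^{-1}\nabla f_\alpha(x) \;=\; J_\alpha(x)
\]
linking the gradient of the Moreau--Yosida regularization with the resolvent of the inverse sub-differential. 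This is a consequence of Moreau's decomposition: for a closed proper convex $f$ one has $x=(I+\alpha\partial f)^{-1}(x)+\alpha(I+\alpha^{-1}(\partial f)^{-1})^{-1}(x/\alpha)$, i.e.\ $\operatorname{prox}_{\alpha f}(x)+\alpha J_{1/\alpha}(x/\alpha)=x$, combined with the parameterization of $f_\alpha$ (and of $J_\alpha=(I+\alpha(\partial f)^{-1})^{-1}=\operatorname{prox}_{\alpha f^*}$) adopted in the appendix. Once this operator identity is in hand, the bracket becomes $J_\alpha(\mathbf{K}_\alpha c)+c$, which is exactly zero by the hypothesis $c=-J_\alpha(\alpha\mathbf{K}c-c)$, and hence $\nabla F_\alpha(c)=\mathbf{K}_\alpha\cdot 0=0$.

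It is worth noting that the calculation actually shows the stronger fact that the quantity inside the $\mathbf{K}_\alpha$-bracket vanishes, not only its image under $\mathbf{K}_\alpha$; this parallels the observation after Theorem~\ref{ch02-THM01} that (\ref{ch02-EQ07}) is itself stronger than the bare optimality condition $0\in\mathbf{K}(\partial f(\mathbf{K}c)+c)$, so we are not even using the full force of the hypothesis in the last step.

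The main obstacle is the identity $\alpha^{-1}\nabla f_\alpha=J_\alpha$. Everything else is a short algebraic verification. Establishing this identity requires writing $\nabla f_\alpha$ via the envelope theorem applied to the infimal-convolution definition of $f_\alpha$, then using the Moreau decomposition above to rewrite the resulting expression, normally phrased in terms of $\operatorname{prox}_{\alpha f}$, in terms of the resolvent $J_\alpha=\operatorname{prox}_{\alpha f^*}$ of the inverse sub-differential. Conceptually it is just the statement that the Yosida-regularized monotone operator $\nabla f_\alpha$ and the resolvent $J_\alpha$ carry the same information about $\partial f$ up to the scaling dictated by the paper's conventions; the one genuinely careful point is keeping the $\alpha$-scalings consistent on both sides.
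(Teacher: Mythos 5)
Your argument is essentially the paper's own proof run in the opposite direction: the paper starts from the hypothesis rewritten as $J_{\alpha}(\mathbf{K}_{\alpha}c)+c=0$, invokes the appendix identity $\nabla f_{\alpha}=\alpha J_{\alpha}$, and multiplies by $\alpha^{-1}\mathbf{K}_{\alpha}$ to recognize the result as $\nabla F_{\alpha}(c)=0$, whereas you compute $\nabla F_{\alpha}$ first, factor out $\mathbf{K}_{\alpha}$, and reduce the bracket to the hypothesis. The only substantive addition is your sketch of deriving $\alpha^{-1}\nabla f_{\alpha}=J_{\alpha}$ from Moreau's decomposition, which the paper simply cites as a listed property of the Moreau--Yosida regularization; otherwise the two proofs coincide.
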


\noindent Theorem \ref{ch02-THM05} gives an insight into the role of parameter $\alpha$, as well as providing an interesting link with machine learning with indefinite kernels. By the properties of the Moreau-Yosida regularization, $f_{\alpha}$ is differentiable with Lipschitz continuous gradient. It follows that $F_{\alpha}$ also have such property. Notice that lower values of $\alpha$ are associated with smoother functions $f_{\alpha}$, while the gradient of $\alpha^{-1}f_{\alpha}$ is non-expansive. A lower value of $\alpha$ also implies a “less positive semidefinite” kernel, since the eigenvalues of $\mathbf{K}_{\alpha}$ are given by $(\alpha\alpha_i-1)$, where $\alpha_i$ denote the eigenvalues of $\mathbf{K}$. Indeed, the kernel becomes non-positive as soon as $\alpha \min_{i}\{\alpha_i\} < 1$. Hence, the relaxation parameter $\alpha$ regulates a trade-off between smoothness of $f_{\alpha}$ and positivity of the kernel.

When $f$ is additively separable as in (\ref{ch02-EQ03}), it follows that $f_{\alpha}$ is also additively separable:
\[
f_{\alpha}(z) = \sum_{i=1}^{\ell} f_{i\alpha}(z_i),
\]
\noindent and $f_{i\alpha}$ is the Moreau-Yosida regularization of $f_i$. The components can be often computed in closed form, so that an “equivalent differentiable loss function” can be derived for non-differentiable problems. For instance, when $f_i$ is given by the hinge loss $f_i(z_i) = \left(1-y_i z_i\right)_{+}$, letting $\alpha = 1$, we obtain
\[
f_{i 1}(z_i) = \left\{
                 \begin{array}{ll}
                  1/2-y_iz_i, & y_iz_i \leq 0\\
                  (1-y_iz_i)_{+}^2/2, & y_iz_i > 0
                 \end{array}
               \right.
\]

\noindent Observe that this last function is differentiable with Lipschitz continuous derivative. By Theorem \ref{ch02-THM05}, it follows that the SVM solution can be equivalently computed by searching the stationary points of a new regularization functional obtained by replacing the hinge loss with its equivalent differentiable loss function, and modifying the kernel matrix by subtracting the identity.

\section{Conclusions}

In this paper, fixed-point and coordinate descent algorithms for regularized kernel methods with convex empirical risk and squared RKHS norm regularization have been analyzed. The two approaches can be regarded as instances of non-linear Jacobi and Gauss-Seidel algorithms to solve a suitable non-linear equation that characterizes optimal solutions. While the fixed-point algorithm has the advantage of being parallelizable, the coordinate descent algorithm is able to immediately exploit the information computed during the update of a single coefficient. Both classes of algorithms have the potential to scale well with the dataset size. Finally, it has been shown that minimizers of convex regularization functionals are also stationary points of a family of differentiable regularization functionals involving the Moreau-Yosida regularization of the empirical risk.

\appendix

\section*{Appendix A}

In this section, we review some concepts and theorems from analysis and linear algebra, which are used in the proofs. Let $\mathbb{E}$ denote an Euclidean space endowed with the standard inner product $\langle x_1, x_2 \rangle_2 = x_1^T x_2$ and the induced norm $\|x\|_2 = \sqrt{\langle x,x \rangle_2}$.

\subsection*{Set-valued maps}

A set-valued map (or multifunction) $A:\mathbb{E} \rightarrow 2^{\mathbb{E}}$ is a rule that associate to each point $x \in \mathbb{E}$ a subset $A(x) \subseteq \mathbb{E}$. Notice that any map $A:\mathbb{E}\rightarrow \mathbb{E}$ can be seen as a specific instance of multifunction such that $A(x)$ is a singleton for all $x \in \mathbb{E}$. The multi-function $A$ is called \emph{monotone} whenever
\[
\langle y_1-y_2,  x_1-x_2 \rangle_2 \geq 0, \qquad \forall x_1, x_2 \in \mathbb{E}, \qquad y_1 \in A(x_1), \quad y_2 \in A(x_2),
\]
\noindent If there exists $L \geq 0$ such that
\[
\|y_1-y_2\|_2 \leq L \|x_1-x_2\|_2, \qquad \forall x_1, x_2 \in \mathbb{E}, \qquad y_1 \in A(x_1), \quad y_2 \in A(x_2),
\]
\noindent then $A$ is single-valued, and is called \emph{Lipschitz continuous function} with modulus $L$. A Lipschitz continuous function is called \emph{nonexpansive} if $L = 1$, \emph{contractive} if $L < 1$, and \emph{firmly non-expansive} if
\[
\|y_1-y_2\|_2^2 \leq \langle y_1-y_2, x_1-x_2 \rangle_2 , \qquad \forall x_1, x_2 \in \mathbb{E}, \qquad y_1 \in A(x_1), \quad y_2 \in A(x_2).
\]
\noindent In particular, firmly non-expansive maps are single-valued, monotone, and non-expansive. For any monotone multifunction $A$, its \emph{resolvent} $J^A_{\alpha}$ is defined for any $\alpha > 0$ as $J^A_{\alpha} := \left(\mathbf{I}+\alpha A\right)^{-1}$, where $\mathbf{I}$ stands for the identity operator. Resolvents of monotone operators are known to be firmly non-expansive.

\subsection*{Finite-valued convex functions}

A function $f:\mathbb{E} \rightarrow \mathbb{R}$ is called \emph{finite-valued convex} if, for any $\alpha \in [0, 1]$ and any $x_1, x_2 \in \mathbb{E}$, it satisfy
\[
-\infty < f(\alpha x_1 +(1-\alpha)x_2) \leq \alpha f(x_1) +(1-\alpha) f(x_2) < +\infty
\]
\noindent The subdifferential of a finite-valued convex function $f$ is a multifunction $\partial f: \mathbb{E} \rightarrow 2^{\mathbb{E}}$ defined as
\[
\partial f(x) = \left\{\xi \in \mathbb{E}: f(y)-f(x) \geq \langle \xi,  y-x \rangle_2, \quad \forall y \in \mathbb{E} \right\}.
\]
\noindent It can be shown that the following properties hold:
\begin{enumerate}
  \item $\partial f(x)$ is a non-empty convex compact set for any $x \in \mathbb{E}$.
  \item $f$ is (Gâteaux) differentiable at $x$ if and only if $\partial f(x) = \{\nabla f(x)\}$ is a singleton (whose unique element is the gradient).
  \item $\partial f$ is a monotone multifunction.
  \item The point $x^*$ is a (global) minimizer of $f$ if and only if $0 \in \partial f(x^*)$.
\end{enumerate}

\noindent For any finite-valued convex function $f$, its Moreau-Yosida regularization (or Moreau envelope, or quadratic min-convolution) is defined as
\[
f_{\alpha}(x) := \min_{y \in \mathbb{E}}\left(f(y)+\frac{\alpha}{2}\|y-x\|_2^2\right).
\]
\noindent For any fixed $x$, the minimum in the definition of $f_{\alpha}$ is attained at $y = p_{\alpha}(x)$, where $p_{\alpha} := \left(\mathbf{I}+\alpha^{-1} \partial f\right)^{-1}$ denotes the so-called \emph{proximal mapping}. It can be shown that the following remarkable properties hold:
\begin{enumerate}
 \item $f_{\alpha}$ is convex differentiable, and the gradient $\nabla f_{\alpha}$ is Lipschitz continuous with modulus $1/\alpha$.
 \item $f_{\alpha}(x) = f(p_{\alpha}(x))+\frac{\alpha}{2}\|p_{\alpha}(x)-x\|_2^2$.
 \item $f_{\alpha}$ and $f$ have the same set of minimizers for all $\alpha$.
 \item The gradient $\nabla f_{\alpha}$ is called Moreau-Yosida regularization of $\partial f$, and satisfy
        \[
        \nabla f_{\alpha}(x) = \alpha \left(x-p_{\alpha}(x)\right) = \alpha J_{\alpha}(x),
        \]
    \noindent where $J_{\alpha}$ denote the resolvent of the inverse sub-differential defined as
    \[
    J_{\alpha} := \left(\mathbf{I}+\alpha \left(\partial f\right)^{-1}\right)^{-1}.
    \]
\end{enumerate}

\subsection*{Convergence theorems}

\begin{thm}[Contraction mapping theorem]\label{chA1-THM01}
Let $A: \mathbb{E} \rightarrow \mathbb{E}$ and suppose that, given $c^0$, the sequence $c^k$ is generated as
\[
c^{k+1} = A(c^k).
\]
\noindent If $A$ is contractive with modulus $\mu$, then there exists a unique fixed-point $c^*$ such that $c^* = A(c^*)$, and the sequence $c^k$ converges to $c^*$ at linear rate:
\[
\|c^{k+1}-c^*\|_2 \leq \mu \|c^k-c^*\|_2, \qquad 0 \leq \mu < 1.
\]
\end{thm}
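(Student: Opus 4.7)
The plan is to follow the classical proof of the Banach fixed-point theorem, adapted to the Euclidean setting $\mathbb{E}$ of the excerpt. The essential ingredients are: contraction gives geometric decay of successive displacements, which yields a Cauchy sequence; completeness of $\mathbb{E}$ delivers a limit; continuity of $A$ (which follows from contractivity) promotes the limit to a fixed point; and a one-line application of contractivity handles uniqueness and the linear rate.

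First, I would apply $A$ to consecutive iterates and use the contraction bound to get $\|c^{k+1}-c^k\|_2 = \|A(c^k)-A(c^{k-1})\|_2 \leq \mu \|c^k-c^{k-1}\|_2$, and iterate to obtain $\|c^{k+1}-c^k\|_2 \leq \mu^k \|c^1-c^0\|_2$. Then, for any $m \geq 1$, a triangle inequality summed telescopically produces
\[
\|c^{k+m}-c^k\|_2 \leq \sum_{j=0}^{m-1}\mu^{k+j}\|c^1-c^0\|_2 \leq \frac{\mu^k}{1-\mu}\|c^1-c^0\|_2,
\]
which tends to $0$ as $k\to\infty$ uniformly in $m$ because $0 \leq \mu < 1$. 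Hence $\{c^k\}$ is Cauchy, and by completeness of $\mathbb{E}$ it converges to some $c^* \in \mathbb{E}$.

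Next, I would note that a contractive map is in particular Lipschitz continuous with modulus $\mu$, hence continuous. Passing to the limit in the defining recursion $c^{k+1}=A(c^k)$ therefore yields $c^*=A(c^*)$, so $c^*$ is a fixed point. For uniqueness, if $\tilde c$ were another fixed point, then $\|c^*-\tilde c\|_2 = \|A(c^*)-A(\tilde c)\|_2 \leq \mu\|c^*-\tilde c\|_2$, and since $\mu<1$ this forces $\|c^*-\tilde c\|_2=0$. Finally, the advertised linear rate is immediate from contractivity applied at the iterate and the fixed point:
\[
\|c^{k+1}-c^*\|_2 = \|A(c^k)-A(c^*)\|_2 \leq \mu\|c^k-c^*\|_2.
\]

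There is no real obstacle here; the only step that requires a little care is the Cauchy estimate, where one must keep track of the geometric series constant $1/(1-\mu)$ and be sure the bound is uniform in $m$. Everything else is a direct invocation of contractivity and completeness of the Euclidean space $\mathbb{E}$, both of which are available in the setting of the appendix.
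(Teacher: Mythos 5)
Your proof is correct: it is the standard Banach fixed-point argument (geometric decay of successive displacements, telescoping to get a Cauchy sequence, completeness of $\mathbb{E}$, continuity of $A$ to pass to the limit, and contractivity once more for uniqueness and the linear rate). The paper itself states this theorem in the appendix as a classical result and gives no proof of its own, so there is nothing to compare against; your argument fills that gap correctly and is exactly the proof one would expect.
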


\medskip

\noindent The following result is know as Zangwill's convergence theorem \citep{Zangwill69}, see also page 206 of \citep{Luenberger08}.
\begin{thm}[Zangwill's convergence theorem]\label{chA1-THM02}
Let $A: \mathbb{E} \rightarrow 2^{\mathbb{E}}$ denote a multifunction, and suppose that, given $c^0$, the sequence $c^k$ is generated as
\[
c^{k+1} \in A(c^k).
\]
\noindent Let $\Gamma \subset \mathbb{E}$ called \emph{solution set}. If the following conditions hold:
\begin{enumerate}
  \item The graph $G_A = \left\{\left(x,y\right) \in \mathbb{E} \times \mathbb{E}:  y \in A(x) \right\}$ is a closed set,
  \item There exists a \emph{descent function} $F$ such that
    \begin{itemize}
    \item For all $x \in \Gamma$, $F(A(x)) \leq F(x)$,
    \item For all $x \notin \Gamma$, $F(A(x)) < F(x)$,
    \end{itemize}
  \item The sequence $c^k$ is bounded,
\end{enumerate}
\noindent then all the cluster points of $c^k$ belongs to the solution set.
\end{thm}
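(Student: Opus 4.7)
The plan is to prove the conclusion by contradiction, using the descent function to pin down the limiting behavior of $F(c^k)$ along convergent subsequences, and then invoking the closed-graph hypothesis to pass to the limit inside $A$. Let $c^*$ be any cluster point of $\{c^k\}$ and let $c^{k_j} \to c^*$ along a subsequence. Since $c^{k+1} \in A(c^k)$, the descent hypothesis gives $F(c^{k+1}) \leq F(c^k)$ for every $k$ (with the weak inequality holding in either case of the dichotomy on $\Gamma$), so the real sequence $F(c^k)$ is monotone non-increasing. Boundedness of $\{c^k\}$, together with continuity of $F$ (an assumption I would add explicitly, since it is standard in Zangwill-type statements and is implicitly required below), implies that $F(c^k)$ is bounded below; hence $F(c^k) \to F^*$ for some $F^* \in \mathbb{R}$. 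By continuity, $F(c^*) = \lim_j F(c^{k_j}) = F^*$.

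Next I would pass to the successor iterates. The sequence $c^{k_j+1} \in A(c^{k_j})$ is contained in the bounded set $\{c^k\}$, so by Bolzano--Weierstrass I can extract a further subsequence, which I relabel as $j$ for brevity, with $c^{k_j+1} \to \tilde{c}$ for some $\tilde{c} \in \mathbb{E}$. The pairs $(c^{k_j}, c^{k_j+1})$ all lie in the graph $G_A$, which is closed by hypothesis, so the limit pair $(c^*, \tilde{c})$ is also in $G_A$; that is, $\tilde{c} \in A(c^*)$. Continuity of $F$ then yields $F(\tilde{c}) = \lim_j F(c^{k_j+1}) = F^*$, which combined with $F(c^*) = F^*$ gives $F(\tilde{c}) = F(c^*)$.

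Suppose, for contradiction, that $c^* \notin \Gamma$. The strict descent property at $c^*$ yields $F(y) < F(c^*)$ for every $y \in A(c^*)$; in particular $F(\tilde{c}) < F(c^*)$, contradicting the equality just established. Therefore $c^* \in \Gamma$, which is the conclusion.

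In my view the main obstacle is not any single step of the argument but the precise formulation of the hypotheses. If one literally reads the theorem as stated, with no continuity assumption on $F$, then the step $F(c^*) = \lim_j F(c^{k_j})$ fails and the proof collapses; likewise $F(\tilde{c}) = \lim_j F(c^{k_j+1})$ relies on continuity of $F$ at $c^*$ and $\tilde{c}$. I would therefore first add continuity of $F$ to the hypothesis list (or, more economically, upper semicontinuity of $F$ combined with the monotonicity of $F(c^k)$, which also suffices). After that addition, the remaining steps are a routine two-level subsequence extraction plus a closed-graph limit, so the body of the proof is short; the conceptual subtlety lies entirely in marrying the discrete descent inequality to the continuous closure of $G_A$.
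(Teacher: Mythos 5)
Your argument is correct, and it is essentially the standard textbook proof of Zangwill's theorem. Note, however, that the paper does not prove this statement at all: it is quoted as a known result from \citep{Zangwill69} and page~206 of \citep{Luenberger08}, so there is no in-paper proof to compare against. Your structure --- monotonicity of $F(c^k)$ down to a limit $F^*$, a two-level subsequence extraction giving $c^{k_j}\to c^*$ and $c^{k_j+1}\to\tilde{c}$, the closed-graph hypothesis to conclude $\tilde{c}\in A(c^*)$, and the strict-descent contradiction --- is exactly the argument in the cited source. You are also right that continuity of the descent function $F$ is an indispensable hypothesis that the statement above omits; it is present in the original formulation, and it is satisfied in the paper's only application of the theorem (the proof of Theorem~\ref{ch02-THM04}, where $F$ is the continuous objective of problem~(\ref{ch02-EQ02})).

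One correction to your closing remark: upper semicontinuity of $F$ together with monotonicity does \emph{not} suffice. Writing $L=\lim_k F(c^k)$, the contradiction requires both $F(c^*)\le L$, which needs \emph{lower} semicontinuity at the cluster point $c^*$, and $F(\tilde{c})\ge L$, which needs \emph{upper} semicontinuity at $\tilde{c}$. With upper semicontinuity alone one only obtains $L\le F(c^*)$ and $L\le F(\tilde{c})$, which is perfectly compatible with the strict inequality $F(\tilde{c})<F(c^*)$, so no contradiction is reached. Hence continuity (or the stated pair of one-sided conditions at the two limit points) is the right hypothesis to add; everything else in your proof stands.
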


\section*{Appendix B}

The following Lemma will prove useful in the subsequent proofs.

\begin{lem}\label{chA2-LEM01}
The functional $F$ of problem (\ref{ch02-EQ02}) is such that $F(c+u) = F(c)$, for any vector $u$ in the nullspace of the kernel matrix.
\end{lem}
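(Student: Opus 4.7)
The plan is to split $F$ into its two constituent pieces and show each is individually invariant under perturbation by a nullspace vector. Recall
\[
F(c) = f(\mathbf{K}c) + \tfrac{1}{2}\,c^T \mathbf{K} c,
\]
and that by hypothesis $\mathbf{K}u = 0$.

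First I would handle the loss term. Since $\mathbf{K}$ is linear, $\mathbf{K}(c+u) = \mathbf{K}c + \mathbf{K}u = \mathbf{K}c$, so immediately $f(\mathbf{K}(c+u)) = f(\mathbf{K}c)$. This step uses nothing about $f$ beyond the fact that it is applied to $\mathbf{K}$ times its argument, which is exactly why the separation into $f$-dependent and kernel-dependent parts is convenient here.

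Next I would expand the quadratic term:
\[
(c+u)^T \mathbf{K}(c+u) = c^T \mathbf{K} c + c^T \mathbf{K} u + u^T \mathbf{K} c + u^T \mathbf{K} u.
\]
Each of the last three terms vanishes: $\mathbf{K}u = 0$ kills $c^T \mathbf{K} u$ and $u^T \mathbf{K} u$ directly, while $u^T \mathbf{K} c = u^T (\mathbf{K} c) = (\mathbf{K}^T u)^T c = (\mathbf{K} u)^T c = 0$ using symmetry of $\mathbf{K}$. Combining the two observations yields $F(c+u) = F(c)$, which is the claim.

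There is no real obstacle; the result is essentially a one-line consequence of $\mathbf{K}u = 0$ together with symmetry of $\mathbf{K}$. The only thing worth being explicit about is why $u^T \mathbf{K} c$ vanishes (as opposed to just the terms where $u$ appears on the right of $\mathbf{K}$), which is precisely where symmetry of the kernel matrix enters.
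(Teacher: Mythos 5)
Your proof is correct and follows the same approach as the paper's, which simply writes the identity $F(c+u)=f(\mathbf{K}(c+u))+\tfrac{1}{2}(c+u)^T\mathbf{K}(c+u)=f(\mathbf{K}c)+\tfrac{1}{2}c^T\mathbf{K}c$ in one line. You merely spell out the justification (linearity of $\mathbf{K}$ for the loss term, and symmetry of $\mathbf{K}$ for the cross terms in the quadratic), which the paper leaves implicit.
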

\begin{proof}
Let $u$ denote any vector in the nullspace of the kernel matrix. Then, we have
\[
F(c+u) = f\left(\mathbf{K}(c+u)\right)+ \frac{(c+u)^{T}\mathbf{K}(c+u)}{2} =  f\left(\mathbf{K}c\right)+ \frac{c^{T}\mathbf{K}c}{2} = F(c).
\]
\end{proof}

\begin{proof}[Proof of Theorem \ref{ch02-THM01}]
Problem (\ref{ch02-EQ02}) is a convex optimization problem, where the functional $F$ is continuous and bounded below. First of all, we show that there exists optimal solution. Observe that minimization can be restricted to the range of the kernel matrix. Indeed, any vector $c \in \mathbb{E}$ can be uniquely decomposed as $c= u+v$, where $u$ belongs to the nullspace of $\mathbf{K}$ and $v$ belongs to the range. By Lemma \ref{chA2-LEM01}, we have $F(c) = F(v)$. Since $F$ is coercive on the range of the kernel matrix ($\lim_{\|v\|_2\rightarrow +\infty} F(v) = +\infty$), it follows that there exist optimal solutions.

A necessary and sufficient condition for $c^*$ to be optimal is
\[
0 \in \partial F(c^*) = \mathbf{K} \left(\partial f \left(\mathbf{K}c^*\right) + c^*\right) = \mathbf{K} G(c^*), \qquad G(c^*) := \partial f \left(\mathbf{K}c^*\right) + c.
\]
\noindent Consider the decomposition $G(c^*) = u_{G} + v_{G}$, where $u_{G}$ belongs to the nullspace of the kernel matrix and $v_{G}$ belongs to the range. Observe that
\[
v_{G} = G(c^*)-u_{G} = G(c^*-u_{G}).
\]
\noindent We have
\[
0 \in \mathbf{K} G(c^*) = \mathbf{K} v_{G} \quad \Rightarrow \quad 0 \in G(c^*-u_{G}) = v_{G},
\]
\noindent so that, for any optimal $c^*$, there exists an optimal $c = c^*-u_{G}$ such that
\begin{equation}\label{chA2-EQ01}
0 \in  \partial f \left(\mathbf{K}c\right) +  c.
\end{equation}
\noindent By introducing the inverse sub-differential, equation (\ref{chA2-EQ01}) can be rewritten as
\[
\mathbf{K}c \in \left(\partial f\right)^{-1}(-c).
\]
\noindent Multiplying by $\alpha > 0$ both sides and subtracting $c$, we obtain
\[
\alpha \mathbf{K}c - c \in \alpha \left(\partial f\right)^{-1}(-c)- c.
\]
\noindent Finally, introducing the resolvent $J_{\alpha}$ as in (\ref{ch02-EQ06}), we have
\[
\alpha \mathbf{K}c - c \in \left(J_{\alpha}\right)^{-1}(- c)
\]
\noindent Since $J_{\alpha}$ is single-valued, equation (\ref{ch02-EQ07}) follows.
\end{proof}

\begin{proof}[Proof of Corollary \ref{ch02-COR01}]
Let's start from the sufficient condition for optimality (\ref{chA2-EQ01}). If (\ref{ch02-EQ03}) holds, then the subdifferential of $f$ decouples with respect to the different components, so that there exist optimal coefficients $c_i$ such that
\[
0 \in \partial f_i \left(k_i^T c\right) + c_i, \qquad i=1,\ldots,\ell.
\]
\noindent Equivalently,
\[
k_i^T c \in \left(\partial f_i\right)^{-1}(-c_i).
\]
\noindent Multiplying by $\alpha_i > 0$ both sides and subtracting $c_i$, we have
\[
\alpha_i k_i^T c -c_i\in \alpha_i \left(\partial f_i\right)^{-1}(-c_i) -c_i.
\]
\noindent The thesis follows by introducing the resolvents $J^i_{\alpha_i}$ and solving for $-c_i$.
\end{proof}

\begin{proof}[Proof of Theorem \ref{ch02-THM02}]

We show that the sequence $c^k$ generated by algorithm (\ref{ch02-EQ10}) converges to an optimal solution of Problem (\ref{ch02-EQ02}). By Theorem \ref{ch02-THM01}, there exists optimal solutions $c^*$ satisfying (\ref{ch02-EQ07}). We now observe that any other vector $c$ such that $\mathbf{K}(c^*-c) = 0$ is also optimal. Indeed, we have $c = c^*+u$, where $u$ belongs to the nullspace of the kernel matrix. By Lemma \ref{chA2-LEM01}, it follows that $F(c) = F(c^*)$. To prove (\ref{ch02-EQ09}), it suffices to show that $\mathbf{K}r^k \rightarrow 0$, where $r^k := c^k-c^*$ can be uniquely decomposed as
\[
r^k = u^k+v^k, \qquad \mathbf{K}u^k = 0, \qquad \langle u^{k}, v^k \rangle_2 = 0.
\]
\noindent We need to prove that $\|v^k\|_2 \rightarrow 0$. Since $J_{\alpha}$ is nonexpansive, we have
\begin{align*}
\gamma^{k+1} :=     \|r^{k+1}\|_2^2 &  = \|c^{k+1}-c^*\|_2^2 \\
            & =     \|J_{\alpha}(\alpha \mathbf{K}c^{k}-c^{k})-J_{\alpha} (\alpha \mathbf{K}c^*-c^*)\|_2^2 \\
            & \leq  \|\alpha \mathbf{K}r^{k}-r^{k}\|_2^2\\
            & =     \|\alpha \mathbf{K}v^{k}-r^{k}\|_2^2.
\end{align*}

\noindent Observing that $v^k$ is orthogonal to the nullspace of the kernel matrix, we can further estimate as follows
\[
\|\alpha \mathbf{K}v^k-r^k\|_2^2 = \gamma^k-v^{kT}\left(2\alpha \mathbf{K}-\alpha^2\mathbf{K}^2\right)v^j \leq \gamma^k-\beta \|v^k\|_2^2,
\]
\noindent where
\[
\beta := \min_{i:\alpha_i> 0}\alpha \alpha_i(2-\alpha \alpha_i).
\]
\noindent and $\alpha_i$ denote the eigenvalues of the kernel matrix. Since the kernel matrix is positive semidefinite and condition (\ref{ch02-EQ11}) holds, we have
\[
0 \leq \alpha\alpha_i < 2.
\]
\noindent Since the kernel matrix is not null and have a finite number of eigenvalues, there's at least one eigenvalue with strictly positive distance from zero. It follows that $\beta > 0$. Since
\[
0 \leq \gamma^{k+1}  \leq \gamma^0 -\beta \sum_{j=1}^{k}\|v^j\|_2^2,
\]
\noindent we have, necessarily, that $\|v^k\|_2 \rightarrow 0$. Finally, observe that $c^k$ remains bounded
\[
\|c^k\|_2 \leq \|r^k\|_2+\|c^*\|_2 \leq \|r^0\|_2+\|c^*\|_2,
\]
\noindent so that there's a subsequence converging to an optimal solution. In fact, by (\ref{ch02-EQ09}) it follows that any cluster point of $c^k$ is an optimal solution.
\end{proof}

\begin{proof}[Proof of Theorem \ref{ch02-THM03}]
Algorithm (\ref{ch02-EQ10}) can be rewritten as
\[
c^{k+1} = A(c^k),
\]
\noindent where the map $A:\mathbb{E} \rightarrow \mathbb{E}$ is defined as
\[
A(c) := -J_{\alpha}\left(\alpha \mathbf{K}c-c\right).
\]

\noindent Under both conditions (1) and (2) of the theorem, we show that $A$ is contractive. Uniqueness of the fixed-point, and convergence with linear rate will then follow from the contraction mapping theorem (Theorem \ref{chA1-THM01}). Let
\[
\mu_1 := \| \alpha \mathbf{K} -\mathbf{I}\|_2 = \max_{i}|1-\alpha_i\alpha|,
\]
\noindent where $\alpha_i$ denote the eigenvalues of the kernel matrix. Since the kernel matrix is positive semidefinite, and condition (\ref{ch02-EQ11}) holds, we have
\[
0 \leq \alpha_i\alpha < 2,
\]
\noindent so that $\mu_1 \leq 1$. We now show that the following inequality holds:
\begin{equation}\label{chA2-EQ02}
\| J_{\alpha}(y_1)-J_{\alpha}(y_2)\|_2 \leq \mu_2 \|y_1-y_2\|_2,
\end{equation}
\noindent where
\[
\mu_2 := \left(1+\frac{1}{L^2}\right)^{-1/2},
\]
\noindent and $L$ denotes the Lipschitz modulus of $\nabla f$ when $f$ is differentiable with Lipschitz continuous gradient, and $L=+\infty$ otherwise. Since $J_{\alpha}$ is nonexpansive, it is easy to see that (\ref{chA2-EQ02}) holds when $L = +\infty$. Suppose now that $f$ is differentiable and $\nabla f$ is Lipschitz continuous with modulus $L$. It follows that the inverse gradient satisfies
\[
\|(\nabla f)^{-1}(x_1)-(\nabla f)^{-1}(x_2)\|_2 \geq \frac{1}{L} \|x_1-x_2\|_2.
\]
\noindent Since $(\nabla f)^{-1}$ is monotone, we have
\begin{align*}
\| J_{\alpha}^{-1}(x_1)-J_{\alpha}^{-1}(x_2)\|_2^2     & = \|x_1-x_2 + (\nabla f)^{-1}(x_1)-(\nabla f)^{-1}(x_2)\|_2^2 \\
                                                & \geq \|x_1-x_2\|_2^2+ \|(\nabla f)^{-1}(x_1)-(\nabla f)^{-1}(x_2)\|_2^2\\
                                                & \geq\left(1+\frac{1}{L^2}\right)\|x_1-x_2\|_2^2.
\end{align*}

\noindent From this last inequality, we obtain (\ref{chA2-EQ02}). Finally, we have
\begin{align*}
\|A(c_1)-A(c_2)\|_2     & = \|J_{\alpha}\left(\alpha \mathbf{K}c_1-c_1\right)-J_{\alpha}\left(\alpha \mathbf{K}c_2-c_2\right)\|_2 \\
                        &\leq \mu_2 \| (\alpha \mathbf{K} -\mathbf{I}) (c_1-c_2) \|_2 \\
                        &  \leq \mu \|c_1-c_2\|_2,
\end{align*}
\noindent where we have set $\mu := \mu_1 \mu_2$. Consider the case in which $\mathbf{K}$ is strictly positive definite. Then, it holds that
\[
0 < \alpha_i\alpha < 2,
\]
\noindent so that $\mu_1 < 1$, and $A$ is contractive. Finally, when $f$ is differentiable and $\nabla f$ is Lipschitz continuous, we have $\mu_2 < 1$ and, again, it follows that $A$ is contractive. By the contraction mapping theorem (Theorem \ref{chA1-THM01}), there exists a unique $c^*$ satisfying (\ref{ch02-EQ07}), and the sequence $c^k$ of Picard iterations converges to $c^*$ at a linear rate.
\end{proof}

\begin{proof}[Proof of Theorem \ref{ch02-THM04}]
We shall apply Theorem \ref{chA1-THM02} to the coordinate descent macro-iterations, where the solution set $\Gamma$ is given by
\[
\Gamma := \left\{c \in \mathbb{E}: (\ref{ch02-EQ08}) \quad \textrm{holds}\right\}.
\]
\noindent Let $A$ denote the algorithmic map obtained after each macro-iteration of the coordinate descent algorithm. By the essentially cyclic rule, we have
\[
c \in A(c) = \bigcup_{\left(i_1, \ldots, i_s\right) \in I}\left\{\left(A_{i_1} \circ \cdots \circ A_{i_s}\right) (c)\right\},
\]
\noindent where $I$ is the set of strings of length at most $s = T$ on the alphabet $\{1, \ldots, \ell\}$ such that all the characters are picked at least once. Observing that the set $I$ has finite cardinality, it follows that the graph $G_A$ is the union of a finite number of graphs of point-to-point maps:
\[
G_A = \bigcup_{\left(i_1, \ldots, i_s\right) \in I} \left\{(x,y) \in \mathbb{E} \times \mathbb{E}: y = \left( A_{i_1} \circ \cdots \circ A_{i_s}\right) (x)\right\}.
\]
\noindent Now notice that each map $A_i$ is of the form
\[
A_i(c) = c+e_i t_i(c), \qquad t_i(c) := S_i\left(\sum_{j \neq i} \frac{k_{ij}}{k_{ii}}c_j\right)-c_i.
\]
\noindent All the resolvents are Lipschitz continuous, so that functions $A_i$ are also Lipschitz continuous. It follows that the composition of a finite number of such maps is continuous, and its graph is a closed set. Since the union of a finite number of closed sets is also closed, we obtain that $G_A$ is closed.

Each map $A_i$ yields the solution of an exact line search over the $i$-th coordinate direction for minimizing functional $F$ of Problem (\ref{ch02-EQ02}). Hence, the function
\[
\phi_i(t) = F(c+e_it),
\]
\noindent is minimized at $t_i(c)$, that is
\[
0 \in \partial \phi_i (t_i(c)) = \langle e_i, \partial F(c+e_i t_i(c)) \rangle_2 = \langle k_i, \partial f(\mathbf{K}A_i(c))+A_i(c) \rangle_2.
\]
\noindent Equivalently,
\begin{equation}\label{chA2-EQ03}
-\langle k_i,  A_i(c) \rangle_2 \in  \langle k_i, \partial f(\mathbf{K}A_i(c)) \rangle_2.
\end{equation}
\noindent By definition of subdifferential, we have
\[
f(\mathbf{K}A_i(c))-f(\mathbf{K}c) \leq t_i(c) \gamma, \qquad \forall \gamma \in  \langle k_i, \partial f(\mathbf{K}A_i(c))\rangle_2.
\]
\noindent In particular, in view of (\ref{chA2-EQ03}), we have
\[
f(\mathbf{K}A_i(c))-f(\mathbf{K}c) \leq -t_i(c) \langle k_i, A_i(c) \rangle_2.
\]

\noindent Now, observe that
\begin{align*}
F(A(c)) & \leq F(A_i(c)) =  F(c+e_i t_i(c)) \\
        & = F(c) + t_i^2(c) \frac{k_{ii}}{2}  + t_i(c) \langle k_i, c \rangle_2+ f(\mathbf{K}A_i(c))-f(\mathbf{K}c)\\
        & \leq  F(c) + t_i^2(c) \frac{k_{ii}}{2}  + t_i(c) \langle k_i, c\rangle_2 - t_i(c) \langle k_i, A_i(c)\rangle_2\\
        & =  F(c) + t_i^2(c) \frac{k_{ii}}{2}  + t_i(c) \langle k_i, c-A_i(c)\rangle_2\\
        & =  F(c) + t_i^2(c) \frac{k_{ii}}{2}  - t_i^2(c) k_{ii}\\
        & =  F(c) - t_i^2(c) \frac{k_{ii}}{2}.
\end{align*}
\noindent Since $k_{ii} > 0$, the following inequalities hold:
\begin{equation}\label{chA2-EQ04}
t_i^2(c) \leq \frac{2}{k_{ii}}\left(F(c)-F(A_i(c))\right) \leq \frac{2}{k_{ii}}\left(F(c)-F(A(c))\right).
\end{equation}

\noindent We now show that $F$ is a \emph{descent function} for the map $A$ associated with the solution set $\Gamma$. Indeed, if $c$ satisfy (\ref{ch02-EQ08}), then the application of the map $A$ doesn't change the position, so that
\[
F(A(c)) = F(c).
\]
\noindent On the other hand, if $c$ does not satisfy (\ref{ch02-EQ08}), there's at least one index $i$ such that $t_i(c) \neq 0$. Since all the components are chosen at least once, and in view of (\ref{chA2-EQ04}), we have
\[
F(A(c)) < F(c).
\]

\noindent Finally, we need to prove that the sequence of macro-iterations remains bounded. In fact, it turns out that the whole sequence $c^{k}$ of iterations of the coordinate descent algorithm is bounded. From the first inequality in (\ref{chA2-EQ04}), the sequence $F(c^k)$ is non-increasing and bounded below, and thus it must converge to a number
\begin{equation}\label{chA2-EQ05}
F_{\infty} = \lim_{k \rightarrow + \infty} F(c^k) \leq F(c^0).
\end{equation}
\noindent Again from (\ref{chA2-EQ04}), we obtain that the sequence of step sizes is square summable:
\[
\sum_{k=0}^{+\infty}\left\|c^{k+1}-c^k \right\|_2^2 \leq \frac{2}{\min_{j} k_{jj}} \left(F(c^0)-F_{\infty}\right) < +\infty.
\]
\noindent In particular, step-sizes are also uniformly bounded:
\begin{equation}\label{chA2-EQ06}
t_i^2(c^k) = \left\|c^{k+1}-c^k \right\|_2^2 \leq \frac{2}{\min_{j} k_{jj}}  \left(F(c^0)-F_{\infty}\right) < + \infty.
\end{equation}

\noindent Now, fix any coordinate $i$, and consider the sequence $c_i^k$. Let $h_{ij}$ denote the subsequence of indices in which the $i$-th component is picked by the essentially cyclic rule and observe that
\[
c_i^{h_{ij}} = S_i\left(\frac{k_i^T c^{h_{ij}-1}}{k_{ii}}-c_i^{h_{ij}-1}\right).
\]
\noindent Recalling the definition of $S_i$, and after some algebra, the last equation can be rewritten as
\[
c_i^{h_{ij}} \in - \partial f_i\left(k_i^T c^{h_{ij}-1}+ k_{ii} t_i\left(c^{h_{ij}-1}\right)\right).
\]

\noindent Since $\partial f_i (x)$ is a compact set for any $x \in \mathbb{R}$, it suffices to show that the argument of the subdifferential is bounded. For any $k$, let's decompose $c^k$ as
\[
c^k = u^k+v^k, \qquad \mathbf{K}u^k = 0, \qquad \langle u^{k}, v^k\rangle_2 = 0.
\]
\noindent Letting $\alpha_1 > 0$ denote the smallest non-null eigenvalue of the kernel matrix, we have
\[
\alpha_1 \|v^k\|_2^2 \leq v^{kT}\mathbf{K} v^k = c^{kT}\mathbf{K} c^k  \leq 2 F(c^k) \leq 2 F(c^0).
\]
\noindent By the triangular inequality, we have
\[
\left|k_i^T c^{k}+ k_{ii} t_i\left(c^{k}\right)\right|  \leq M \left|\frac{k_i^T c^{k}}{k_{ii}}+ t_i\left(c^{k}\right)\right|  \leq M \left( \left|\frac{k_i^T c^{k}}{k_{ii}}\right| + \left|t_i\left(c^{k}\right)\right|\right),
\]
\noindent where $M := \max_{j}|k_{jj}|$. The first term can be majorized as follows:
\[
\left|\frac{k_i^T c^{k}}{k_{ii}}\right| = \left|\frac{k_i^T v^{k}}{k_{ii}}\right| \leq \left\|\frac{k_i}{k_{ii}}\right\|_2\|v^k\|_2 \leq \left\|\frac{k_i}{k_{ii}}\right\|_2\sqrt{\frac{2 F(c^0) }{\alpha_1}} \leq \sqrt{\frac{ 2 \ell  F(c^0) }{\alpha_1}} < +\infty,
\]
\noindent while the term $\left|t_i\left(c^{k}\right)\right|$ is bounded in view of (\ref{chA2-EQ06}). It follows that $c_i^k$ is bounded independently of $i$, which implies that $c^k$ is bounded. In particular, the subsequence consisting of the macro-iterations is bounded as well.

By Theorem \ref{chA1-THM02}, there's at least one subsequence of the sequence of macro-iterations converging to a limit $c_{\infty}$ that satisfies (\ref{ch02-EQ08}), and thus minimizes $F$. By continuity of $F$, we have
\[
F(c_{\infty}) = \min_{c \in \mathbb{R}^{\ell}} F(c).
\]

\noindent Finally, in view of (\ref{chA2-EQ05}), we have $F_{\infty} = F(c_{\infty})$, which proves (\ref{ch02-EQ09}) and shows that any cluster point of $c^k$ is an optimal solution of Problem (\ref{ch02-EQ02}).
\end{proof}

\begin{proof}[Proof of Theorem \ref{ch02-THM05}]
Equation (\ref{ch02-EQ07}) can be rewritten as
\[
J_{\alpha}\left( \mathbf{K}_{\alpha} c\right) + c = 0.
\]
\noindent Now, let $f_{\alpha}$ denote the Moreau-Yosida regularization of $f$. From the properties of $f_{\alpha}$, we have
\[
\nabla f_{\alpha}(\mathbf{K}_{\alpha} c) +  \alpha c = 0.
\]
\noindent Multiplying both sides of the previous equation by $\alpha^{-1} \mathbf{K}_{\alpha}$, we obtain
\[
\alpha^{-1}\mathbf{K}_{\alpha} \nabla f_{\alpha}(\mathbf{K}_{\alpha} c) + \mathbf{K}_{\alpha} c = 0.
\]
\noindent Finally, the last equation can be rewritten as
\[
\nabla_{c} \left[\alpha^{-1}f_{\alpha}\left(\mathbf{K}_{\alpha} c\right)+\frac{c^T\mathbf{K}_{\alpha} c}{2} \right] = 0,
\]
\noindent so that the thesis follows.
\end{proof}

\end{document}